\title{MAC, a novel stochastic optimization method\thanks{This project is supported by Hungarian National Research, Development, and Innovation Office via grants FK134332 and K132109.}}
\author[1]{Attila L\'{a}szl\'{o} Nagy}
\affil[1]{Department of Stochastics, Institute of Mathematics,
Budapest University of Technology and Economics, 
M\H{u}egyetem rkp. 3., H-1111 Budapest, Hungary, (nagyal@math.bme.hu)}
\author[2]{Goitom Simret Kidane}
\author[2]{Tam\'{a}s Tur\'{a}nyi}
\affil[2]{Institute of Chemistry, ELTE E\"{o}tv\"{o}s Lor\'{a}nd University,
Budapest, Hungary}
\author[2,3]{J\'{a}nos T\'{o}th}
\affil[3]{Department of Mathematical Analysis, Institute of Mathematics,
Budapest University of Technology and Economics, 
M\H{u}egyetem rkp. 3., H-1111 Budapest, Hungary,
(jtoth@math.bme.hu)}
\date{}
\begin{document}
\maketitle

\begin{abstract}
A novel stochastic optimization method called MAC was suggested. The method is based on the calculation of the objective function at several random points and then an empirical expected value and an empirical covariance matrix are calculated. The empirical expected value is proven to converge to the optimum value of the problem. The MAC algorithm was encoded in Matlab and the code was tested on 20 test problems. Its performance was compared with those of the interior point method (Matlab name: fmincon), simplex, pattern search (PS), simulated annealing (SA), particle swarm optimization (PSO), and genetic algorithm (GA) methods. The MAC method failed two test functions and provided inaccurate results on four other test functions. However, it provided accurate results and required much less CPU time than the widely used optimization methods on the other 14 test functions.    
\end{abstract}

\section{Introduction} \label{sec:intro}
Optimization is one of the central topics of both applied and theoretical mathematics. It studies the problem of maximizing or minimizing outputs of functions related to the selected parameters. Using efficient optimization, one can in general increase efficiency and decrease losses. Many classes of optimization problems have been studied in mathematics, computer science, and other fields of science (e.g. see refs\ \cite{hopfield1985neural,krentel1986complexity,gambella2021optimization}).  Over the years, a plethora of specialized optimization areas emerged (e.g.\ convex optimization, stochastic programming, combinatorial optimization, etc.) and various methods have been elaborated and applied to specific problems. The algorithms can be classified into two groups based on whether they use randomization (stochastic methods) or not (deterministic methods).\cite{}\
Stochastic optimization methods (e.g. \cite{Fouskakis-2002,spall2012stochastic,Zheng-2015,Reddy-2017}) search for an optimal solution using the concept of probabilistic translation rules (randomness). These algorithms are gaining popularity due to certain properties which deterministic algorithms do not have. For example, stochastic algorithms may suit to use for objective functions that have multiple local optima whereby deterministic algorithms may get stuck.

An extremum (a maximum or minimum point) can be either global (the highest or lowest function value within a region) or local (the highest or lowest value in a neighborhood but not necessarily in the entire region). Finding a global extremum, in general, proves to be an onerous task while trying to explore a local extremum is usually less burdensome  
(see \cite{pressteukolskyvetterlingflannery},
\cite{zhigljavskyzilinskas}, \cite{weise} and \cite{hendrixgazdagtoth},). 
Several optimization methods have already been built into popular program packages like \Mma\ (Wolfram language) or Matlab.

The least-square function is a non-linear function of several variables (called parameters), which is to be optimized on a convex and compact parameter space. The least-square optimization can be regarded as one of the most studied areas that are still investigated and applied in many different areas of science and engineering (see \cite{bard,edalarra,seberwild}). The tasks of optimization are either maximization or minimization, but they are trivially related to each other, hence from now we are focusing on and formalizing only minimization problems. We are interested in minimizing a real-valued continuous function defined over a convex, compact domain.
The solutions to these types of problems are often based on finding the minimum of some kind of distance between the model results and the corresponding measurements \cite{bard,gustavoszekelytoth}.

In the applications of chemical kinetics to combustion systems, optimization of rate parameters of reaction mechanisms is required for the efficient interpretation of experimental data \cite{goitom2022efficient}. The rate parameters describe the temperature and pressure dependence of the rate coefficients of the reaction steps, and their initial values are assigned based on direct measurements of rate coefficients, theoretical calculations, or analogies to similar reaction steps. The mechanism is then tested against indirect experimental data, like measured laminar burning velocities and ignition delay times, through an optimization process. This optimization is carried out, by comparing the experimental and simulated data points using a nonlinear objective function called an error function. This error function have typically many local minima and the evaluation of the error function is computationally expensive because it is based on the solution of large systems of ODEs and PDEs. A search for efficient optimization methods applicable to chemical kinetics optimizations is one of the motivations of this paper.

As our focus is on the optimization of a function over a domain, the two main important issues to investigate are:
\begin{itemize}
\item[(i)] how many times we have to evaluate the error function and 
\item[(ii)] how many iterations are needed to arrive at or close to the extremum point?
\end{itemize}

In the present work, we describe a novel method that belongs to the family of stochastic approximation methods. The method generates a converging series of mean values and (empirical) covariance matrices, therefore it was named the MAC method. It seems to have good properties from the above-enumerated viewpoints. 

Convergence of the method will be proved in Theorem \ref{thm:convbounded}. 
We have chosen a series of benchmark functions found in  \cite{jamil2013literature,mishra2006performance,rahnamayan2007novel,shang2006note} and we have found that on several test problems, the novel method requires much fewer function evaluations, thus it is appropriate to solve problems when the function evaluation is expensive. Also, it approaches close to the minimum in fewer steps than several other methods.

The structure of the paper is as follows. Section \ref{subsec:algo} introduces the main concepts and presents the algorithm. After that (Section \ref{subsec:results}) the mathematical results are discussed. 
This is followed by the presentation of the numerical results on benchmark functions in Section \ref{sec:benchmark}.
We summarize the results and outline the plans for the near future in Section \ref{sec:summary}.

\section{Main concepts}\label{sec:main}

This section is devoted to presenting our stochastic optimization algorithm (MAC) in rigorous terms. After a preliminary discussion on the problem, we introduce the main algorithm and later we discuss some results.

\subsection{The algorithm}\label{subsec:algo}
We are given a \emp{continuous}, \emp{deterministic}, \emp{real-valued} function $Q$ of $d$ independent variables ($\xb=(x_1,x_2,\ldots,x_d)^\T, d\in\Zbb^{+}$) defined on a domain $\Dc\in\R{d}$. We assume that $\Dc$ is a \emp{convex}, \emp{compact} (i.e.\ closed and bounded) set of $\R{d}$. Under these terms, it follows that $Q$ is a bounded function and its (global and local) minimum(s) and maximum(s) are attained over $\Dc$. 

Our main goal is to find the extrema of $Q$ with a stochastic iterative algorithm. Minimizing $Q$ over $\Dc$ is equivalent to maximizing $-Q$ above the same region. Hence, without loss of generality, we are targeting to find the (global) minimum points of $Q$ on $\Dc$. Note that $Q$ is equivalently called \emp{objective function} throughout the article.

Under the above terms, let us also define the (non-empty) set $\Gc$ of \emp{global minimum} points of $Q$ over $\Dc$ as
\begin{equation}
\Gc := \left\{\xb\in\Dc:\, Q(\xb)\leq Q(\yb)\,\mbox{ holds for all $\yb\in\Dc$}\right\}.\label{eq:globmins}
\end{equation}

Let us introduce the \emp{mass} (or \emp{penalty}) \emp{function} $g$ that is associated with the \emp{objective function} $Q$ in such a way that for $(\rho,\xb)\in\R{+}_0\times \R{d}$:
\begin{equation}\label{eq:massfunc}
g(\rho,\xb) = \e{-\rho\, Q(\xb)}.
\end{equation}
Note that $0 < g(\rho,\xb)\leq 1$ holds for every $\rho\in\R{+}_0$ and $\xb\in\Dc$.

Next, we introduce the concept of \emp{ambient (apriori) distribution}. This is simply defined to be any fixed $\nub$ \emp{absolutely continuous}, $d$-dimensional probability distribution such that:
\begin{itemize}
    \item the support of $\nub$ is a convex, compact subset of $\R{d}$;
    \item $\nub$ is 'standard', that is its expectation is $\nullv$ and its covariance matrix is the identity matrix $\I$.
\end{itemize}
E.g.\ $\nub$ can be chosen to be the standard $d$-dimensional normal distribution (support of which is $\R{d}$) or it can be the uniform distribution over the $d$-dimensional unit 'ball' around the origin (that is its support is $\{\xb: \norm{\xb}_2\leq 1\}$). 

Finally, let $\aph, \gamma:\Zbb^{+}_0\times\Zbb^{+}_0\raw\Zbb^{+}_0$ be two \emp{non-decreasing, unbounded, double sequences} of natural numbers, that is for all $m \leq n,\, M \leq N\in\Zbb^{+}_0$: we have $\aph(m,M)\leq\aph(n,N)$, $\gamma(m,M)\leq\gamma(n,N)$, and $\sup_{l\in\Zbb^+}\aph(l,N)=\sup_{L\in\Zbb^+}\aph(n,L)=\sup_{l\in\Zbb^+}\gamma(l,N)=\sup_{L\in\Zbb^+}\gamma(n,L)=+\infty$ for all fixed $n, N\in\Zbb^{+}_0$. We also fix $\aph(0,0)$=0.\\

Now, we are ready to present the pseudo-code of our stochastic algorithm.
\begin{enumerate}
\item[] \emp{BEGIN}
\item let $\beta: \Zbb^{+}_0\raw\Zbb^{+}_0$ be a fixed, non-decreasing function of natural numbers such that $\beta(0)=0$ and let $\delta>0$ be a real number; with a slight abuse of notation, we use the shorthand notations below $\aph_n:=\aph(n,\beta(n))$ and $\gamma_n:=\gamma(n,\beta(n))$ for $n\in\Zbb^{+}_0$.
\item let $\ub_0\in\Dc$, pick a positive definite matrix $\Ub_0\in\R{d\times d}$, and fix an ambient (apriori) distribution $\nub$;
\item $n:=1$;
\begin{enumerate}[i.]
\item[] \emp{REPEAT}
\item 
Take a completely independent sample  $\left(\xi_i\right)_{i=\aph_{n-1}+1}^{\aph_n}$ from the distribution $\nub$ and let: 
\begin{equation*}
\pb_i:=\ub_{n-1}+\Ub_{n-1}\xi_i,
\end{equation*}
where $i=\aph_{n-1}+1,\aph_{n-1}+2,\ldots,\aph_n$. (Note that $\pb_{i}$'s are already calculated in the previous step(s) for $i=1,2,\ldots,\aph_{n-1}$.)
Finally, compute the weights:
$(\pb_i)_{i=\aph_{n-1}}^{\aph_n}$.
\item 
Compute the \emp{empirical expected value} and \emp{empirical covariance matrix} with those weights determined above, i.e.:
\begin{align}
\ub_n & := \sum_{i=1}^{\aph_n} \frac{g(\gamma_n,\pb_i)}
{\sum_{j=1}^{\aph_n} g(\gamma_n,\pb_j)}\, \pb_i, \label{eq:expiter}\\
\Ub_n &:= 
\left(
\sum_{i=1}^{\aph_n} \frac{g(\gamma_n,\pb_i)}
{\sum_{j=1}^{\aph_n} g(\gamma_n,\pb_j)}\, (\pb_i-\ub_n)(\pb_i-\ub_n)^\T 
\right)^{1/2},
\label{eq:variter}
\end{align}
where recall the penalty function defined in Eq. \ref{eq:massfunc}.
\item $n:=n+1$;
\item[] \emp{UNTIL} $\{\norm{\ub_{n}-\ub_{n-1}}_2 < \delta\}$.
\end{enumerate}
\item[] \emp{END}
\end{enumerate}
\begin{remark}
The $\aph_n$ and $\gamma_n$ parameters are also called the 'learning' parameters. The more we tweak them, i.e.\ increase, the closer we might get to the optimum (minimum) point.

Note that '$\Ub_n^2$' is a symmetric, positive semi-definite matrix, hence its square root is well-defined and is real as it can be decomposed as $\Ob\mathbf{\Lambda}\Ob^{\T}$ for some orthogonal matrix $\Ob\in\R{d\times d}$, where $\mathbf{\Lambda}$ is the diagonal matrix of the eigenvalues of $\Ub_n^2$, hence $\Ub_n=\Ob\mathbf{\Lambda}^{1/2}\Ob^{\T}$.

Mind that $\ub_n\in\Dc$ holds for all $n\in\Zbb^{+}$ due to the convexity of the set $\Dc$, hence, $\{\ub_n\}_{n\in\Zbb^+}$ is a bounded sequence in $\R{d}$. However, $\{\Ub_n\}_{n\in \Zbb^+}$ is only bounded when the support of $\nub$ is finite.
\end{remark}
Now, the immediate question of whether the above method converges at all, more precisely $\ub_n$ converges to an extremum point, is going to be answered affirmatively under some further conditions.

\subsection{Results}\label{subsec:results}

The main theorem of this section concerns the proof of convergence for the previously introduced algorithm under some further conditions on its parameters. 

\begin{thm}\label{thm:convbounded}
Let $\ub_0\in\Dc$ and $\Ub_0\in\R{d\times d}$ be a column-vector and an arbitrarily chosen positive definite matrix, respectively. 
Assume that the ambient (apriori) distribution $\nu$ is a.s. \emp{bounded} in such a way that 
for some $0<T<\frac{1}{4}$:
\begin{equation}\label{eq:finsupprt}
\nu\left(\Sc\right)=1,\,\mbox{ where $\,\Sc=\{\xb:\norm{\xb}_2^2\leq T\}$}.
\end{equation}
Let $\Ac:=\{\xb\in \Sc\,:\,\ub_0 + \Ub_0\xb\in \Dc\}$ and assume that $Q$ has a single minimum value over $\ub_0+\Ub_0 \Ac$, i.e.:
\begin{align*}
\argmin_{\xb\in \Ac}Q(\ub_0+\Ub_0\xb)&=\{\oxb\}.
\end{align*}
In other words, $\Gc=\{\ub_0+\Ub_0\oxb\}$, where recall \ref{eq:globmins}.
Furthermore, with a slight abuse of notation, let $\aph_n=N\cdot \tilde{\aph}_n$ and $\gamma_n=\gamma\cdot \tilde{\gamma}_n$, where $N\in\Zbb^+$, $\gamma\in\R{+}$ are free parameters, and $\tilde{\aph}, \tilde{\gamma}$ alone satisfy the requirements of the algorithm listed above. 
Then, we have the following convergence results
\begin{gather}
\lim_{(n,N,\gamma)\to+\infty}\ub_n=\ub_0+\Ub_0\oxb,\label{eq:expconv}\\
\lim_{(n,N,\gamma)\to+\infty}\Ub_n=\nullv\nullv^\T\label{eq:varconv}
\end{gather}
to hold such that for Eq. \ref{eq:expconv} we impose the condition $\lim_{(N,\gamma)\raw+\infty}N\cdot\e{-\gamma}=0$, where the above convergence is meant almost surely (in short: a.s.).
\end{thm}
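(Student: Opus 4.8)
The plan is to decouple the two mechanisms at work: the contraction $\Ub_n\to\nullv\nullv^\T$, which is essentially deterministic and comes from $\nu$ being supported in $\Sc$ together with $T<\tfrac14$; and the identification of $\lim_n\ub_n$ with the global minimizer, which is a Laplace-type concentration of the reweighting measure $\propto\e{-\gamma_nQ}$ combined with a strong law of large numbers, and which is where the product structure $\aph_n=N\tilde\aph_n$, $\gamma_n=\gamma\tilde\gamma_n$ and the hypothesis $N\e{-\gamma}\to0$ enter. Throughout I take the update to re-express all accumulated sample points through the current affine map, i.e.\ $\pb_i=\ub_{n-1}+\Ub_{n-1}\xi_i$ for $1\le i\le\aph_n$ at stage $n$; points that leave $\Dc$ are discarded (or $Q$ is continuously extended), and convexity of $\Dc$ keeps every $\ub_n$ in $\Dc$.

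\emph{Step 1: geometric decay of $\Ub_n$.} Write $w_i:=g(\gamma_n,\pb_i)\big/\sum_jg(\gamma_n,\pb_j)$ and $\bar\xi:=\sum_iw_i\xi_i$. Then $\pb_i-\ub_n=\Ub_{n-1}(\xi_i-\bar\xi)$, so, $\Ub_{n-1}$ being symmetric,
\[
\Ub_n^2=\Ub_{n-1}\Big(\sum_iw_i(\xi_i-\bar\xi)(\xi_i-\bar\xi)^\T\Big)\Ub_{n-1}.
\]
As $\nu(\Sc)=1$ and $\Sc$ is convex, each $\xi_i$ and the convex combination $\bar\xi$ lie in $\Sc$, hence $\norm{\xi_i-\bar\xi}_2^2\le4T$ and the bracketed matrix has spectral norm at most $4T$; therefore $\norm{\Ub_n^2}_2\le4T\,\norm{\Ub_{n-1}}_2^2$, i.e.\ $\norm{\Ub_n}_2\le2\sqrt T\,\norm{\Ub_{n-1}}_2$. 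Since $2\sqrt T<1$, this gives $\norm{\Ub_n}_2\le(2\sqrt T)^n\,\norm{\Ub_0}_2\to0$, which is already \eqref{eq:varconv}. Moreover $\ub_n-\ub_{n-1}=\Ub_{n-1}\bar\xi$ with $\norm{\bar\xi}_2\le\sqrt T$, so $\norm{\ub_n-\ub_{n-1}}_2\le\sqrt T\,(2\sqrt T)^{n-1}\norm{\Ub_0}_2$; thus $\{\ub_n\}$ is Cauchy and, summing the geometric tail, $\norm{\ub_n-\ub_1}_2\le\tfrac{\sqrt T}{1-2\sqrt T}\,\norm{\Ub_1}_2$ for every $n\ge1$.

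\emph{Step 2: the first iterate, and the conclusion.} By Step 1 it is enough to show that, a.s., $\ub_1\to\ub_0+\Ub_0\oxb$ and $\Ub_1\to\nullv\nullv^\T$ as $(N,\gamma)\to+\infty$; then $\norm{\ub_n-(\ub_0+\Ub_0\oxb)}_2\le\norm{\ub_1-(\ub_0+\Ub_0\oxb)}_2+\tfrac{\sqrt T}{1-2\sqrt T}\norm{\Ub_1}_2\to0$ uniformly in $n\ge1$, which is \eqref{eq:expconv}. Here $\pb_i=\ub_0+\Ub_0\xi_i$ with $\xi_i$ i.i.d.\ from $\nu$ restricted to $\Ac$, $\aph_1=N\tilde\aph_1$, $\gamma_1=\gamma\tilde\gamma_1$, $\ub_1=\ub_0+\Ub_0\sum_iw_i\xi_i$, and $\Ub_1^2=\Ub_0\big(\sum_iw_i(\xi_i-\bar\xi)(\xi_i-\bar\xi)^\T\big)\Ub_0$. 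Fix $\varepsilon>0$ and set $Q^{\ast}:=Q(\ub_0+\Ub_0\oxb)$. As $Q(\ub_0+\Ub_0\cdot)$ is continuous on the compact $\Ac$ with the \emph{single} minimizer $\oxb$, there is $\eta=\eta(\varepsilon)>0$ with $Q(\ub_0+\Ub_0\xb)\ge Q^{\ast}+\eta$ whenever $\norm{\xb-\oxb}_2\ge\varepsilon$, and a radius $\varepsilon'\in(0,\varepsilon)$ on whose ball one has $Q(\ub_0+\Ub_0\cdot)<Q^{\ast}+\eta/2$. A.s.\ for $N$ large, a fixed positive fraction $p:=\nu\big(\{\norm{\xb-\oxb}_2<\varepsilon'\}\cap\Ac\big)>0$ of the $\xi_i$ land in that ball, so $\sum_jg(\gamma_1,\pb_j)\ge\tfrac12p\,\aph_1\,\e{-\gamma_1(Q^{\ast}+\eta/2)}$, while the terms with $\norm{\xi_i-\oxb}_2\ge\varepsilon$ contribute at most $\aph_1\,\e{-\gamma_1(Q^{\ast}+\eta)}$; dividing,
\[
\textstyle\sum_{i\,:\,\norm{\xi_i-\oxb}_2\ge\varepsilon}w_i\ \le\ \tfrac{2}{p}\,\e{-\gamma_1\eta/2}\ \xrightarrow[\ \gamma\to\infty\ ]{}\ 0 .
\]
Since $\sum_iw_i\xi_i$ is a convex combination of points of $\Sc$, it follows that $\norm{\ub_1-\ub_0-\Ub_0\oxb}_2\le\norm{\Ub_0}_2\big(\varepsilon+C\,\e{-\gamma_1\eta/2}\big)$; letting $\gamma\to\infty$ and then $\varepsilon\to0$ gives $\ub_1\to\ub_0+\Ub_0\oxb$, and the same split applied to $\Ub_1^2$ gives $\norm{\Ub_1}_2\le\norm{\Ub_0}_2\big(4\varepsilon^2+C\,\e{-\gamma_1\eta/2}\big)^{1/2}\to0$.

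\emph{Main obstacle.} The delicate part is Step 2 --- more precisely, upgrading the qualitative use of the law of large numbers there to a genuine almost-sure statement valid along $(N,\gamma)\to+\infty$. As $\gamma$ grows the empirical weights concentrate on ever fewer sample points, so the empirical averages $\aph_1^{-1}\sum_jg(\gamma_1,\pb_j)$ (and the analogous numerator, and the count of $\xi_i$ in the $\varepsilon'$-ball) are faithful to their $\nu$-expectations only if the sample size $\aph_1=N\tilde\aph_1$ grows fast enough against the sharpness $\gamma_1=\gamma\tilde\gamma_1$. The hard part will be to make this quantitative: one replaces the bare law of large numbers by a Bernstein/Hoeffding deviation inequality for these bounded i.i.d.\ averages and sums the failure probabilities through Borel--Cantelli, calibrating the statistical error against the exponential concentration rate $\e{-\gamma_1\eta/2}$ of the reweighting measure --- this calibration being exactly what the hypothesis $\lim_{(N,\gamma)\to\infty}N\e{-\gamma}=0$ is there to permit. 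Two minor points also feed in: the uniqueness of $\oxb$, which forces the reweighting measure to concentrate at a single point rather than on a convex combination of several minimizers, and the implicit requirement that $\oxb$ lie in the support of $\nu$ restricted to $\Ac$, without which $p=0$ and the iteration cannot reach $\oxb$.
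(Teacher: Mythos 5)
Your proposal is correct in strategy and, in the step that actually matters, takes a genuinely different route from the paper. Step~1 coincides with the paper's argument (the paper runs the same contraction in the Frobenius norm rather than the spectral norm, obtaining $\norm{\Ub_n}_F\leq c\norm{\Ub_{n-1}}_F$ with $c=2\sqrt{T}<1$ and the same Cauchy estimate for $\ub_n$), so nothing is lost there. The divergence is in the identification of the limit. The paper expands $\ub_n=\ub_0+\Ub_0\sum_i w_{i,\aph_1,\gamma_1}\xi_i+\sum_{j\geq 2}\Ub_{j-1}\sum_i w_{i,\aph_j,\gamma_j}\xi_i$, invokes the strong law of large numbers for each fixed $n$, and kills the tail by claiming that $\sum_i w_{i,\aph_1,\gamma_1}^2\to 1$ under the hypothesis $N\e{-\gamma}\to 0$, which via the bound $\norm{\Ub_1}_F\leq c\norm{\Ub_0}_F\sqrt{1-\sum_i w_i^2}$ forces $\Ub_1\to\nullv\nullv^\T$; why the first sum $\sum_i w_i\xi_i$ tends to $\oxb$ is left essentially unargued. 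You instead prove a Laplace-type concentration bound, $\sum_{i:\norm{\xi_i-\oxb}_2\geq\varepsilon}w_i\leq\tfrac{2}{p}\e{-\gamma_1\eta/2}$, which simultaneously yields $\sum_i w_i\xi_i\to\oxb$ and $\norm{\Ub_1}_2\to 0$, and then controls the entire tail $n\geq 2$ by $\norm{\ub_n-\ub_1}_2\leq\tfrac{\sqrt{T}}{1-2\sqrt{T}}\norm{\Ub_1}_2$. This supplies precisely the mechanism the paper leaves implicit, and it is arguably more robust: the reweighting measure need only concentrate on a shrinking neighbourhood of $\oxb$, not on a single sample point (the paper's claim $\sum_i w_i^2\to 1$ is the much stronger single-point statement, and it is doubtful that $N\e{-\gamma}\to 0$ alone implies it when $Q$ is, say, locally quadratic at the minimum). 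Two remarks on your own assessment. First, the ``main obstacle'' you describe is less severe than you fear: once the strong law guarantees, a.s.\ for all large $N$, that at least a fraction $p/2$ of the $\xi_i$ lie in the $\varepsilon'$-ball, your ratio bound is deterministic in $\gamma$, so no Hoeffding/Borel--Cantelli calibration is needed; a countable intersection over $\varepsilon_k\downarrow 0$ finishes the a.s.\ statement. Second, as a consequence your argument never uses $N\e{-\gamma}\to 0$ --- which is legitimate for a sufficiency claim, but worth flagging since it shows that hypothesis is an artifact of the paper's particular route. Your caveats (re-expressing all accumulated $\pb_i$ through the current affine map, discarding points outside $\Dc$, and requiring $\oxb$ to lie in the support of $\nu$ restricted to $\Ac$) are implicit in the paper's proof as well and are rightly made explicit.
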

\begin{proof}{(Theorem \ref{thm:convbounded}).}
First, we prove the second convergence \ref{eq:varconv} (for empirical variance) and then focus on the first one. First, let $c=\sqrt{4 T}$ and note that by our assumption \ref{eq:finsupprt}: $0<c<1$ holds.

It is straightforward to see from Eq. \ref{eq:variter} that
\begin{align*}
\Ub_n^2 = \Ub_{n-1}\left(\sum_{i=1}^{\aph_n}w_{i,\aph_n,\gamma_n}\wb_i\wb_i^\T\right)\Ub_{n-1},
\end{align*}
where we used the below short-hand notations:
\begin{align*}
\wb_i&=\xi_i - w_{i,\aph_n,\gamma_n}\xi_i,\\
w_{i,\aph_n,\gamma_n}&=\frac{g(\gamma_n,\pb_i)}{\sum_{j=1}^{\aph_n} g(\gamma_n,\pb_j)} 
\end{align*}
Recall that the random variables $\xi_i$'s are directly sampled from $\nu$, hence $\xi_i \in \Sc$ holds for every $1\leq i\leq\aph_n$.
Then, by \ref{eq:finsupprt}, the following chain of inequality can easily be derived:
\begin{align*}
\norm{\wb_i\wb_i^\T}_F
&\leq
\sup_{\substack{\kappa_i\in[0,1], \xb_i\in \Sc \\ \sum_{i=1}^{\aph_n}\kappa_i=1, \norm{\xb_i}_2^2\leq T \\ (1\leq i\leq\aph_n)}}
\norm{\left(\xb_i-\sum_{j=1}^{\aph_n}\kappa_i\xb_i\right)\left(\xb_i^\T-\sum_{i=1}^{\aph_n}\kappa_i\xb_i^\T\right)}_F\\
&\leq \sup_{\xb\in\Sc}\norm{2\xb}_2^2 = 4T = c^2,
\end{align*}
holding for all $1\leq i\leq \aph_n$, where $\norm{\cdot}_F$ denoted the Frobenius norm.
Now, from the above estimate it is clear that
\begin{align}
\norm{\Ub_{n}}_F^2
&=\Tr\Ub_n^2\nonumber\\
&=\Tr\Ub_{n-1}\left(\sum_{i=1}^{\aph_n}w_{i,\aph_n,\gamma_n}\wb_i\wb_i^\T\right)\Ub_{n-1}\nonumber\\
&=\sum_{i=1}^{\aph_n}w_{i,\aph_n,\gamma_n}\Tr\left(\wb_i\wb_i^\T\right)\Ub_{n-1}^2\nonumber\\
&\leq\sum_{i=1}^{\aph_n}w_{i,\aph_n,\gamma_n}\abs{\SPr{\wb_i\wb_i^\T}{\Ub_{n-1}^2}_F}\nonumber\\
&\leq\sum_{i=1}^{\aph_n}w_{i,\aph_n,\gamma_n}\norm{\wb_i\wb_i^\T}_F\norm{\Ub_{n-1}^2}_F\nonumber\\
&=\norm{\Ub_{n-1}}_F^2 \sum_{i=1}^{\aph_n} w_{i,\aph_n,\gamma_n} \norm{\wb_i}_2^2\nonumber\\
&\leq c^2 \norm{\Ub_{n-1}}_F^2 \sum_{i=1}^{\aph_n} w_{i,\aph_n,\gamma_n} (1 - w_{i,\aph_n,\gamma_n})\nonumber\\
&=c^2 \norm{\Ub_{n-1}}_F^2 \left(1 - \sum_{i=1}^{\aph_n} w_{i,\aph_n,\gamma_n}^2 \right)\label{eq:varestim}
\end{align}
holds for every $n\in\Zbb^+$, where we used the cyclicity and linearity of the trace, the Cauchy--Bunyakovsky--Schwarz inequality, and the sub-multiplicative property of the Frobenius norm along the way. 

Now, using the inequality just obtained in \ref{eq:varestim}, an induction on $n$ leads to the following estimates for $\Ub_n$:
\begin{align}
\norm{\Ub_{n}}_2 &\leq \norm{\Ub_{n}}_{F}\nonumber\\
&\leq c^{n-1} \norm{\Ub_1}_F\nonumber\\
&\leq c^{n}\norm{\Ub_{0}}_{F}\sqrt{1 - \sum_{i=1}^{\aph_n}w_{i,\aph_n,\gamma_n}^2}\nonumber\\
&\leq c^n \norm{\Ub_{0}}_{F} \label{eq:varestim1}
\end{align}
that hold for every $n\in\Zbb^+$, taking advantage of the subordinate property of the Frobenius norm to the 2-norm and the fact that $\nu$ is being finitely supported here.
This then implies that $\Ub_n\to\nullv\nullv^\T$ as $(n,N,\gamma)\to+\infty$ a.s. Since each norm is equivalent in finite dimension, $\Ub_n$ converges to $\nullv\nullv^\T$ a.s. in any (matrix) norm completing the proof of Eq. \ref{eq:varconv}.

Now, turning to the proof of Eq. \ref{eq:expconv}, observe that using the above notation we have 
\begin{align}
\ub_{n} = \ub_{n-1} + \Ub_{n-1}\sum_{i=1}^{\aph_n} w_{i,\aph_n,\gamma_n} \xi_i\label{eq:expiter2}
\end{align}
to hold for every $n\in\Zbb^+$. As a consequence
\begin{align*}
\norm{\ub_{n}-\ub_{n-1}}_2&\leq\norm{\Ub_{n-1}}_2 \norm{\sum_{i=1}^{\aph_n} w_{i,\aph_n,\gamma_n} \xi_i}_2\\
&\leq \norm{\Ub_{n-1}}_2 \sup_{\xb\in\Sc} \norm{\xb}_2 \leq c \norm{\Ub_{n-1}}_F \leq c^{n}\norm{\Ub_0}_F,
\end{align*}
using the previously deduced inequality for $\Ub_n$ (Ineq. \ref{eq:varestim1}), and the relation between the Frobenius- and the 2-norm of a matrix.
With the above inequality under the belt, it is not hard to see that $\ub_n$ is indeed a Cauchy sequence. 
Let $n,m\in \Zbb^+$ such that $n>m$, then it holds that
\begin{align}
\norm{\ub_n-\ub_m}_{2}
&\leq\sum_{i=1}^{n-m}\norm{\ub_{m+i}-\ub_{m+i-1}}_{2}\nonumber\\
&\leq \norm{\Ub_0}_F\sum_{i=1}^{n-m}c^{m+i}\leq \norm{\Ub_0}_F\frac{c^m}{1-c},\label{eq:cauchy}
\end{align}
which goes to zero as $(n,m)\raw+\infty$. It then follows that there exists a $\ub_{\infty}$ random variable such that $\ub_n\raw\ub_{\infty}$ as $n\raw+\infty$ a.s., and
\begin{align}
\norm{\ub_n-\ub_{\infty}}_{2}\leq C\cdot c^{n},\label{eq:convestim}
\end{align}
where $C=\frac{1}{1-c}\norm{\Ub_0}_F$ is an absolute constant depending only on the choice of $\Ub_0$, that is no matter how we have chosen $(N,\gamma)$.

Note that Eq. \ref{eq:expiter2} can be further written as:
\begin{align}
\ub_n = \ub_0 + \Ub_{0}\sum_{i=1}^{\aph_1}w_{i,\aph_1,\gamma_1} \xi_i + \sum_{j=2}^{n}\Ub_{j-1}\sum_{i=1}^{\aph_j}w_{i,\aph_j,\gamma_j} \xi_i\quad (n > 0),\label{eq:expiterexpand}
\end{align}
and from this it is easy to see that for each fixed $n\in\Zbb^+$, the sequence $(\ub_n)_{(N,\gamma)}$ as $(N,\gamma)\raw+\infty$ converges a.s. using the strong law of large numbers. Hence, the limit $\lim_{(n,N,\gamma)\raw+\infty}\ub_n$ exists a.s., taking advantage of the uniform estimate pf Ineq. \ref{eq:convestim}. Finally, using the fact that $\sum_{i=1}^{\aph_1}w_{i,\aph_1,\gamma_1}^2$ goes to $1$ as $(N,\gamma)\raw+\infty$ such that we have $N\cdot\e{-\gamma}\raw 0$ in place by our assumption, and by recalling Ineq. \ref{eq:varestim1}, we can conclude that the double sum (third term) of Eq. \ref{eq:expiterexpand} converges to zero, whereas the rest approaches the global minimum.
\end{proof}

\section{Benchmarking}\label{sec:benchmark}

The new method was tested on 18 artificial optimization test functions and the results were compared with results obtained from a series of widely used numerical optimization methods.

\subsection{Setup for benchmarking}\label{subsec:setup}
The newly proposed stochastic global optimization method, to be referred to as the MAC method was implemented in the Matlab programming language. MAC refers to empirical Mean And empirical Covariance matrix as it was based on a series of calculations of empirical mean and empirical covariance matrix.  
Several nonlinear uni-modal and multi-modal benchmark functions were selected to compare the performance of the method with other well-known numerical optimization methods.
The functions given by an explicit formula that is used for this benchmarking are outlined in Table \ref{tab:funs}, below. 
Recently, \cite{layeb} has found a few functions that are even harder to minimize that are included in Table \ref{tab:funs} (f9 to f18).

\begin{table}[h!]
    \centering
    \begin{tabular}{llcl}
         &  \\
\text{ID} & \text{Function Name} & \text{Dim} & \text{Domain}\\
    \hline
    \hline
f1 & Ackley & 10 & $\left[-32.768\ 32.768\right]^{5}$\\
f2 & Cross-in-tray & 2 & $\left[-10\ 10\right]^{2}$\\
f3 & Rastrigin & 10 & $\left[-5.12\ 5.12\right]^{5}$\\
f4 & Rosenbrock & 10 & $\left[-5\ 10\right]^{5}$\\
f5 & RosenbrockSmall & 10 & $\left[-2.048\ 2.048\right]^{5}$\\
f6 & Rosenbrock scaled & 4 & $\left[0\ 1\right]^{4}$\\
f7 & Sphere & 20 & $\left[-5.12\ 5.12\right]^{5}$\\
f8 & Zakharov & 30 & $\left[-5\ 10\right]^{5}$\\
f9 & Layeb01 & 5 & $\left[-100\ 100\right]^{5}$\\
f10 & Layeb02 & 5 & $\left[-10\ 10\right]^{5}$\\
f11 & Layeb03 & 5 & $\left[-10\ 10\right]^{5}$\\
f12 & Layeb04 & 5 & $\left[-10\ 10\right]^{5}$\\
f13 & Layeb10 & 5 & $\left[-100\ 100\right]^{5}$\\
f14 & Layeb11 & 5 & $\left[-10\ 10\right]^{5}$\\
f15 & Layeb12 & 5 & $\left[-100\ 100\right]^{5}$\\
f16 & Layeb17 & 5 & $\left[-100\ 100\right]^{5}$\\
f17 & Layeb19 & 5 & $\left[-5\ 5\right]^{5}$\\
f18 & Layeb20 & 5 & $\left[-5\ 5\right]^{5}$\\
    \end{tabular}
        \caption{Function ID, dimension, and domain of test functions}
        \label{tab:funs}
\end{table}
Details of the test functions (Table 1) are found in the following references \cite{jamil2013literature,mishra2006performance,rahnamayan2007novel,shang2006note} and \cite{layeb}.
The MAC method is an iterative technique that starts from some initial values of its hyper-parameters. These initial values of the hyper-parameters depend on the user and problem type. There are two important parameters that control the whole method (hence it has an effect on its convergence, as well), they are:
\begin{enumerate}
\item 
the sample size $\aph_n$ that is generated at every step $n$, and
\item
another 'learning' parameter denoted by $\gamma_n$.
\end{enumerate}
\newpage
As discussed in the pseudo-code, the MAC method has several tuning parameters. In this implementation and testing of the MAC method, the following initial values of these tuning parameters and their updating rules were used.
\begin{align*}
& N = 4,\quad n = 0\\
&\gamma_{0} := 0.001,\quad
\gamma_{n} := 2.8 \times \gamma_{n-1}\\ 
&\aph_0 := 1,\quad\aph_n := \aph_{n-1} + n \times N. 
\end{align*}

\subsection{Benchmark results}\label{subsec:benchmark}

The final calculated results of the different numerical methods and the MAC method for the benchmark functions are given as tables in the following pages. The interior point (Matlab name fmincon) and the simplex methods were used from the MATLAB Optimization Toolbox, while the pattern search (PS), simulated annealing (SA), particle swarm optimization (PSO), and genetic algorithm (GA) methods were used from the MATLAB Global Optimization Toolbox. All the tested methods were used with their default method parameters. 
In Table \ref{tab:best}, the minimum values are displayed, in Table \ref{tab:iter}, the number of function evaluations required to obtain the minimum value, and in Table \ref{tab:timing}, the elapsed time (sec) required are displayed. 
The calculations were performed on a Windows 10 PC, Intel(R) Core(TM) i5-8250U CPU @ 1.80 GHz, RAM 8.00 GB. 
\eject
\begin{landscape}
\begin{table}[h!]
    \centering
    \begin{tabular}{|l|c|c|c|c|c|c|c|c|} 
         \hline
\text{Function} & \text{fmincon} & \text{simplex} & \text{PS}  & \text{SA} & \text{PSO} & \text{GA} & \text{MAC} & \text{Exact}\\
    \hline
Ackley & $\mathrm{18.2}$ & 9.0 & $\mathrm{1.1E-4}$ & 15.6 & $\mathrm{2.2E-7}$ & $\mathrm{2.2E-5}$ & $\mathrm{8.0E-3}$ & 0\\
    \hline
Cross-in-tray & -1.7906 & -1.7906 & -2.0626 & -2.0626 & -2.0626 & -2.0626 & -2.0626 & -2.0626\\
    \hline
Rastrigin & fail & 74.6 & $\mathrm{5.9E-9}$ & 36.8 & 8.0 & $\mathrm{2.9E-7}$ & $\mathrm{2.7E-4}$ & 0\\
    \hline
Rosenbrock & $\mathrm{1.7E-11}$ & $\mathrm{6.9E-9}$ & $\mathrm{2.6E-1}$ & $\mathrm{3.0E-1}$ & $\mathrm{8.3E-4}$ & $\mathrm{9.5E-2}$ & $\mathrm{8.5E-1}$ & 0\\
    \hline
RosenbrockSmall & $\mathrm{6.4E-11}$ & $\mathrm{7.8E-24}$ & $\mathrm{1.7E-4}$ & 6.7 & 4.0 & $\mathrm{8.1E-4}$ & $\mathrm{1.6E-2}$ & 0\\
    \hline
RosenbrockScaled & -1.0192 & -1.0192 & -1.0192 & -1.0191 & -1.0192 & -1.0191 & -1.0191\\
    \hline
Sphere & $\mathrm{1.1E-23}$ & $\mathrm{2.5E-16}$ & $\mathrm{3.0E-11}$ & $\mathrm{4.3}$ & $\mathrm{1.3E-7}$ & $\mathrm{1.1E-5}$ & $\mathrm{2.3E-4}$ & 0\\
    \hline
Zakharov & $\mathrm{5.4E-14}$ & fail & 74.6 & 9.5 & fail & $\mathrm{1.8E-4}$ & $\mathrm{3.7E-6}$ & 0\\
    \hline
Layeb01 & 3.4 & fail & 0 & $\mathrm{3.8E-3}$ & fail & $\mathrm{6.0E-2}$ & 51.4 & 0\\
    \hline
Layeb02 & $\mathrm{7.2E-1}$ & 0 & 0 & $\mathrm{6.0E-1}$ & $\mathrm{7.2E-1}$ & $\mathrm{5.8E-1}$ & 0 & 0\\
    \hline
Layeb03 & $\mathrm{-8.7E-4}$ & $\mathrm{-2.3E-3}$ & $\mathrm{-1.3E-3}$ & $\mathrm{-3.8E-4}$ & fail & $\mathrm{-3.6E-3}$ & $\mathrm{-6.1E-4}$ & -4\\
    \hline
Layeb04 & -31.631 & -11.561 & -31.631 & -7.761 & -31.631 & -31.631 & -23.6283 & -31.631\\
    \hline
Layeb10 & fail & $\mathrm{1.3E-24}$ & 61.5 & 61.5 & fail & 75.3 & $\mathrm{2.9E-1}$ & 0\\
    \hline
Layeb11 & fail & -2 & -4 & $\mathrm{-9.9E-1}$ & -2 & -3.0 & -3.6 & -4\\
    \hline
Layeb12 & -12.3545 & -13.7137 & -14.8731 & -13.0683 & -13.2480 & -14.7952 & -14.785 & -14.873\\
    \hline
Layeb17 & 4.0 & 16.8 & 0 & 57.6 & 4.0 & 4.0 & 4.3 & 0\\
    \hline
Layeb19 & $\mathrm{2.5E-16}$ & $\mathrm{3.4E-30}$ & 0 & fail & $\mathrm{1.9E-13}$ & $\mathrm{2.2E-11}$ & $\mathrm{2.6E-6}$ & 0\\
    \hline
Layeb20 & $\mathrm{1.4E-16}$ & $\mathrm{9.3E-28}$ & 0 & 27.7 & $\mathrm{2.0E-9}$ & $\mathrm{1.1E-6}$ & $\mathrm{2.8E-4}$ & 0\\
    \hline
    \end{tabular}
        \caption{The exact value versus the best value found by the different optimization methods.}
        \label{tab:best}
\end{table}
\end{landscape}
\eject

\begin{table}[!h]
    \centering
    \begin{tabular}{|l|c|c|c|c|c|c|c|} 
         \hline
\text{Function} & \text{fmincon} & \text{simplex} & \text{PS}  & \text{SA} & \text{PSO} & \text{GA} & \text{MAC}\\
    \hline
Ackley & 163 & 1877 & 3096 & 5621 & 11100 & 5787 & 1723\\
    \hline
Cross-in-tray & 24 & 191 & 234 & 1033 & 850 & 429 & 4810\\
    \hline
Rastrigin & fail & 840 & 3406 & 4681 & 8700 & 11803 & 4615\\
    \hline
Rosenbrock & 702 & 5652 & 10000 & 10001 & 230005 & 250563 & 4825\\
    \hline
RosenbrockSmall & 755 & 4851 & 20000 & 8311 & 230050 & 470053 & 773\\
    \hline
Rosenbrock Scaled & 400 & 567 & 1906 & 4511 & 2450 & 2920 & 2796\\ 
    \hline
Sphere & 339 & 16323 & 11701 & 13500 & 10550 & 16362 & 5431\\
    \hline
Zakharov & 2000 & fail & 20000 & 13441 & fail & 98988 & 2043\\
    \hline
Layeb01 & 541 & fail & 321 & fail & fail & 32436 & 4831\\
    \hline
Layeb02 & 78 & 60 & 76 & 8906 & 1300 & 9641 & 4810\\
    \hline
Layeb03 & 252 & 504 & 722 & 3676 & fail & 9547 & 3233\\
    \hline
Layeb04 & 644 & 555 & 704 & 3321 & 5300 & 6445 & 5431\\
    \hline
Layeb10 & fail & 22856 & 191 & 2501 & fail & 7949 & 4810\\
    \hline
Layeb11 & fail & 1880 & 153 & 5871 & 9050 & 5364 & 3233\\
    \hline
Layeb12 & 200 & 979 & 266 & 3616 & 10600 & 9359 & 3713\\
    \hline
Layeb17 & 480 & 2212 & 389 & 10306 & 3250 & 5834 & 4828\\
    \hline
Layeb19 & 5005 & 1079 & 121 & fail & 2800 & 3578 & 2400\\
    \hline
Layeb20 & 132 & 898 & 121 & 2626 & 3650 & 3390 & 4810\\
    \hline
    \end{tabular}
        \caption{The number of function evaluations until convergence.}
        \label{tab:iter}
\end{table}
\begin{landscape}
\begin{table}[h!]
    \centering
    \begin{tabular}{|l|c|c|c|c|c|c|c|} 
         \hline
\text{Function} & \text{fmincon} & \text{simplex} & \text{PS}  & \text{SA} & \text{PSO} & \text{GA} & \text{MAC}\\
    \hline
Ackley & $\mathrm{7.4E-1}$ & $\mathrm{1.8E-1}$ & $\mathrm{2.4E-1}$ & $\mathrm{2.1}$ & $\mathrm{4.7E-1}$ & $\mathrm{1.6}$ & $\mathrm{2.7E-1}$\\
    \hline
Cross-in-tray & $\mathrm{3.5E-2}$ & $\mathrm{7.1E-3}$ & $\mathrm{4.9E-2}$ & $\mathrm{3.3E-1}$ & $\mathrm{8.3E-2}$ & $\mathrm{3.1E-1}$ & $\mathrm{4.8E-1}$\\
    \hline
Rastrigin & fail & $\mathrm{1.3E-1}$ & $\mathrm{5.2E-1}$ & $\mathrm{2.4}$ & $\mathrm{3.0E-1}$ & $\mathrm{1.0}$ & $\mathrm{3.0}$\\
    \hline
Rosenbrock & $\mathrm{8.1E-1}$ & $\mathrm{3.4E-1}$ & $\mathrm{8.8E-1}$ & $\mathrm{11.2}$ & $\mathrm{3.3}$ & $\mathrm{13.6}$ & $\mathrm{9.2E-1}$\\
    \hline
RosenbrockSmall & $\mathrm{8.0E-1}$ & $\mathrm{5.4E-1}$ & $\mathrm{2.2}$ & $\mathrm{4.1}$ & $\mathrm{4.1}$ & $\mathrm{16.4}$ & $\mathrm{1.2}$\\
    \hline
RosenbrockScaled & $\mathrm{6.4E-1}$ & $\mathrm{7.5E-2}$ & $\mathrm{4.6}$ & $\mathrm{1.7}$ & $\mathrm{4.0E-1}$ & $\mathrm{7.1E-1}$ & $\mathrm{7.4E-1}$\\
    \hline
Sphere & $\mathrm{5.6E-1}$ & $\mathrm{6.9E-1}$ & $\mathrm{6.0E-1}$ & $\mathrm{6.5}$ & $\mathrm{3.0E-1}$ & $\mathrm{1.4}$ & $\mathrm{1.5}$\\
    \hline
Zakharov & $\mathrm{6.7E-1}$ & fail & $\mathrm{8.4E-1}$ & 8.0 & fail & 6.1 & $\mathrm{9.7E-2}$\\
    \hline
Layeb01 & $\mathrm{7.0E-1}$ & fail & $\mathrm{2.6E-1}$ & fail & fail & $\mathrm{1.9}$ & $\mathrm{1.7E-1}$\\
    \hline
Layeb02 & $\mathrm{4.5E-1}$ & $\mathrm{5.2E-2}$ & $\mathrm{3.3E-1}$ & 2.5 & $\mathrm{1.7E-1}$ & 1.1 & $\mathrm{3.9E-1}$\\
    \hline
Layeb03 & $\mathrm{5.3E-1}$ & $\mathrm{4.2E-1}$ & $\mathrm{6.9E-1}$ & 1.2 & fail & $\mathrm{8.8E-1}$ & $\mathrm{2.3E-1}$\\
    \hline
Layeb04 & $\mathrm{9.7E-1}$ & $\mathrm{6.2E-2}$ & $\mathrm{2.9E-1}$ & $\mathrm{9.9E-1}$ & $\mathrm{2.3E-1}$ & $\mathrm{6.9E-1}$ & $\mathrm{4.8E-1}$\\
    \hline
Layeb10 & fail & 1.1 & $\mathrm{9.1E-1}$ & $\mathrm{4.0E-1}$ & fail & $\mathrm{4.1E-1}$ & $\mathrm{8.8E-2}$\\
    \hline
Layeb11 & fail & $\mathrm{1.8E-1}$ & $\mathrm{6.8E-2}$ & 2.1 & $\mathrm{3.3E-1}$ & $\mathrm{7.6E-1}$ & $\mathrm{6.3E-2}$\\
    \hline
Layeb12 & $\mathrm{6.3E-1}$ & $\mathrm{8.4E-2}$ & $\mathrm{2.6E-1}$ & 1.6 & $\mathrm{3.4E-1}$ & 1.0 & $\mathrm{1.3E-1}$\\
    \hline
Layeb17 & $\mathrm{6.8E-1}$ & $\mathrm{1.4E-1}$ & $\mathrm{3.4E-1}$ & 2.9 & $\mathrm{7.4E-1}$ & $\mathrm{1.4E-1}$ & $\mathrm{1.2E-1}$\\
    \hline
Layeb19 & 2.7 & $\mathrm{2.5E-1}$ & 1.5 & fail & 2.2 & 1.3 & $\mathrm{6.8E-1}$\\
    \hline
Layeb20 & $\mathrm{6.0E-1}$ & $\mathrm{2.3E-1}$ & $\mathrm{3.3E-1}$ & $\mathrm{6.7E-1}$ & $\mathrm{6.9E-1}$ & $\mathrm{9.8E-1}$ & $\mathrm{9.4E-1}$\\
    \hline
    \end{tabular}
        \caption{Running time (sec) until convergence.}
        \label{tab:timing}
\end{table}
\end{landscape}

\begin{figure}[h!]
    \centering
    \begin{minipage}{0.49\textwidth}
    \centering
    \includegraphics[width =1.25\textwidth]{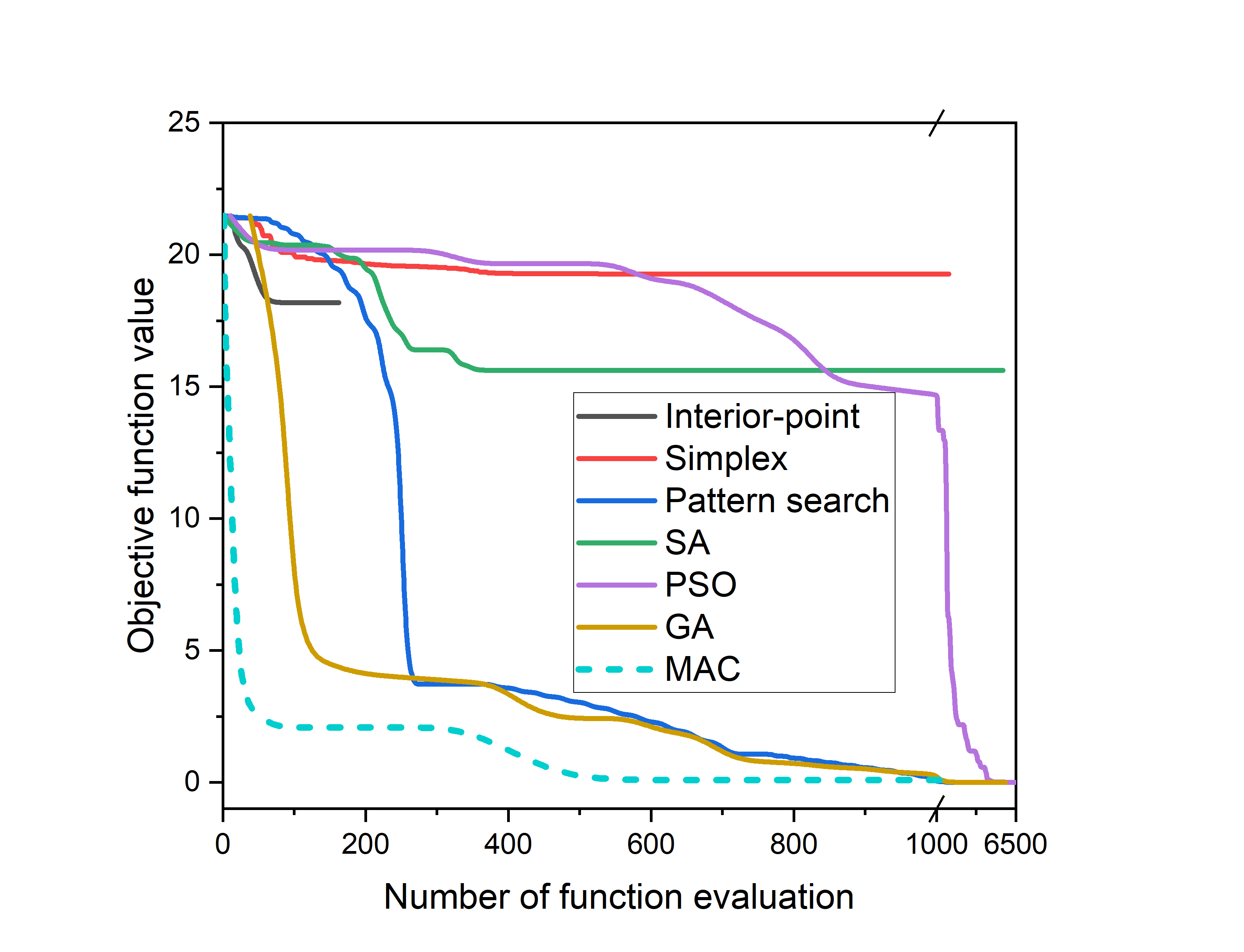}
    \caption*{Ackley 10D function}
    \label{}
    \end{minipage}
    \hfill
    \begin{minipage}{0.49\textwidth}
    \centering
    \includegraphics[width =1.25\linewidth]{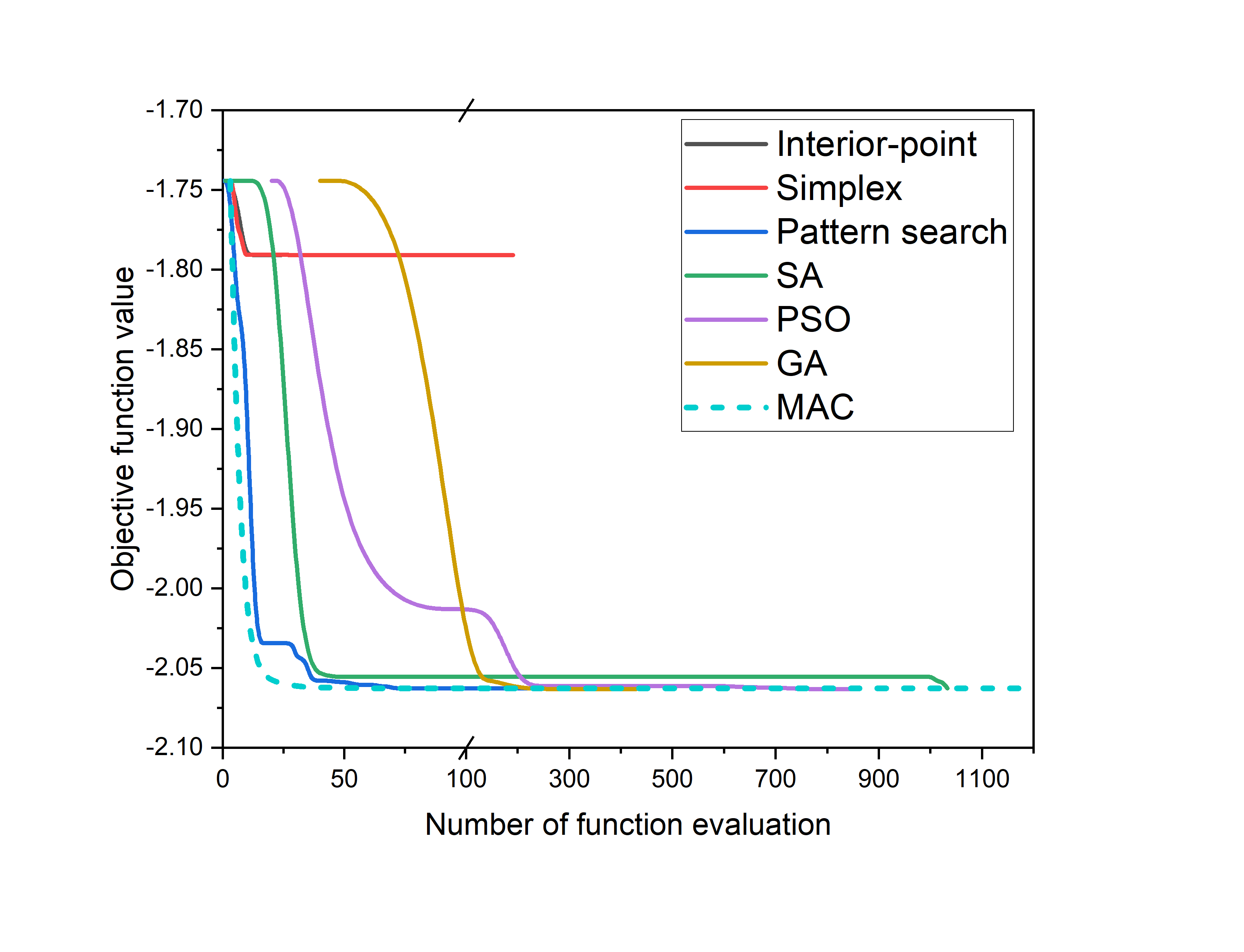}
    \caption*{Cross-in-tray 2D function}
    \label{}
    \end{minipage}
    \label{}
\end{figure}
\begin{figure}[h!]
    \centering
    \begin{minipage}{0.49\textwidth}
    \centering
    \includegraphics[width =1.20\textwidth]{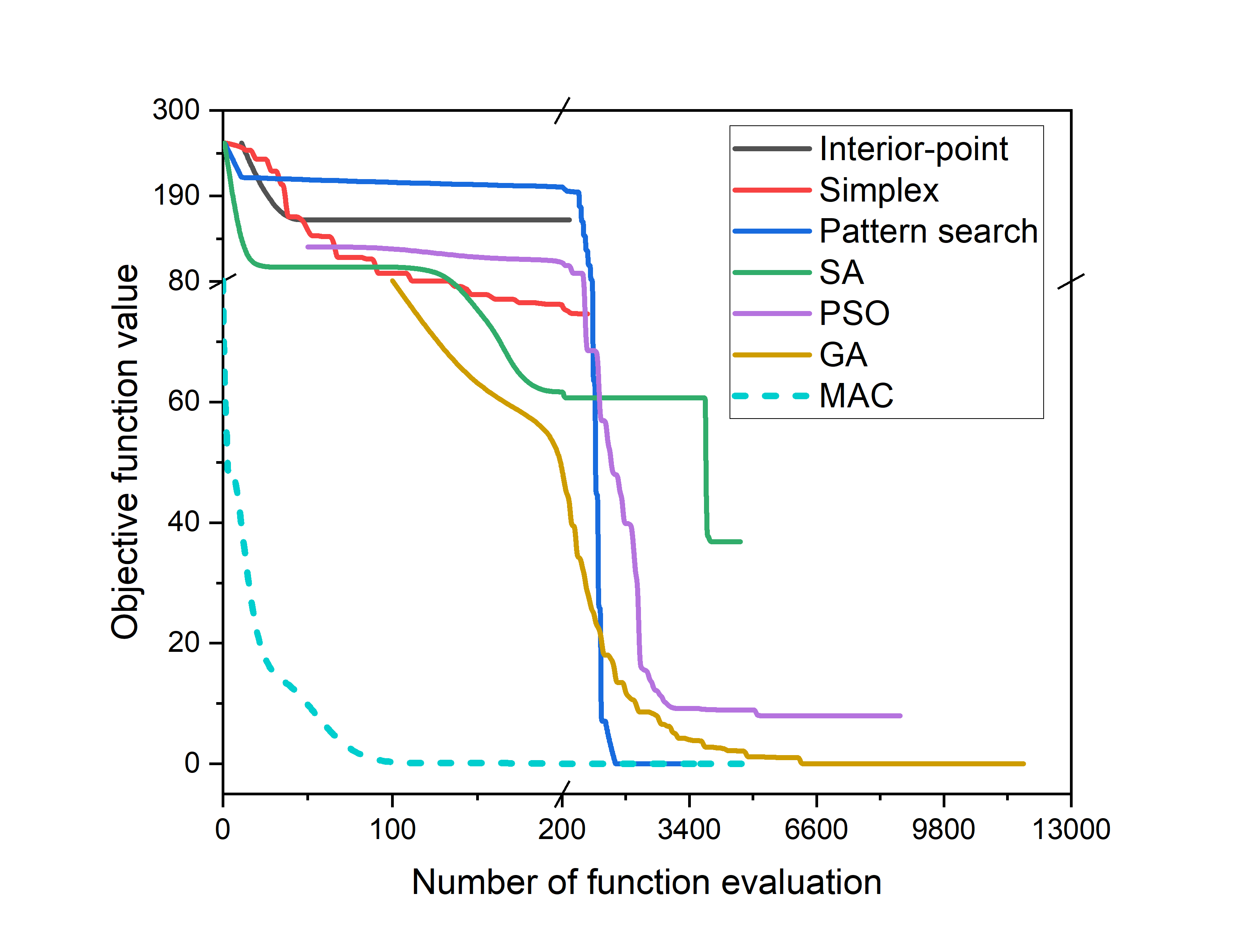}
    \caption*{Rastrigin 10D function}
    \label{}
    \end{minipage}
    \hfill
    \begin{minipage}{0.49\textwidth}
    \centering
    \includegraphics[width =1.25\linewidth]{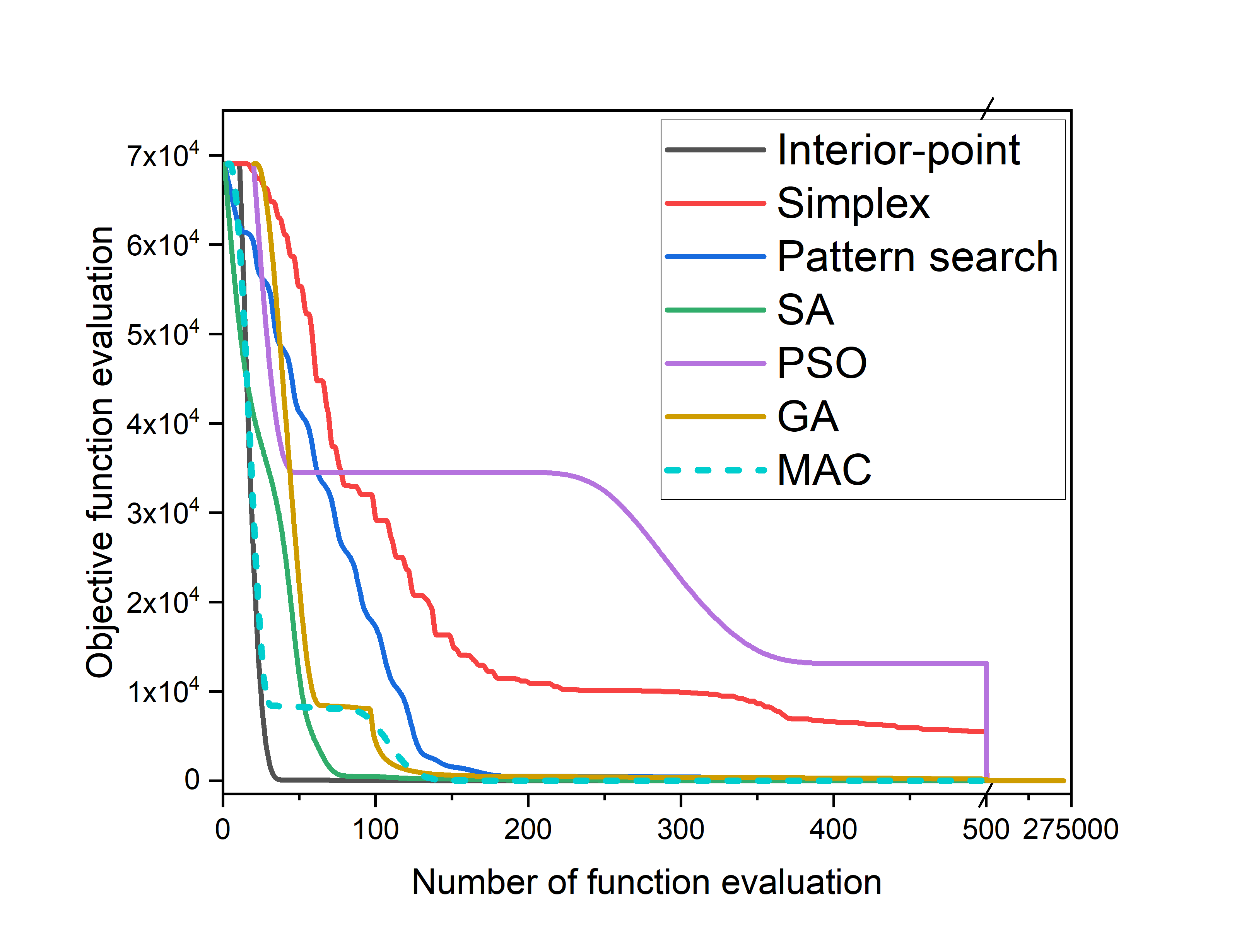}
    \caption*{Rosenbrock 10D function}
    \label{}
    \end{minipage}
\end{figure}
\begin{figure}[h]
    \centering
    \begin{minipage}{0.49\textwidth}
    \centering
    \includegraphics[width =1.15\textwidth]{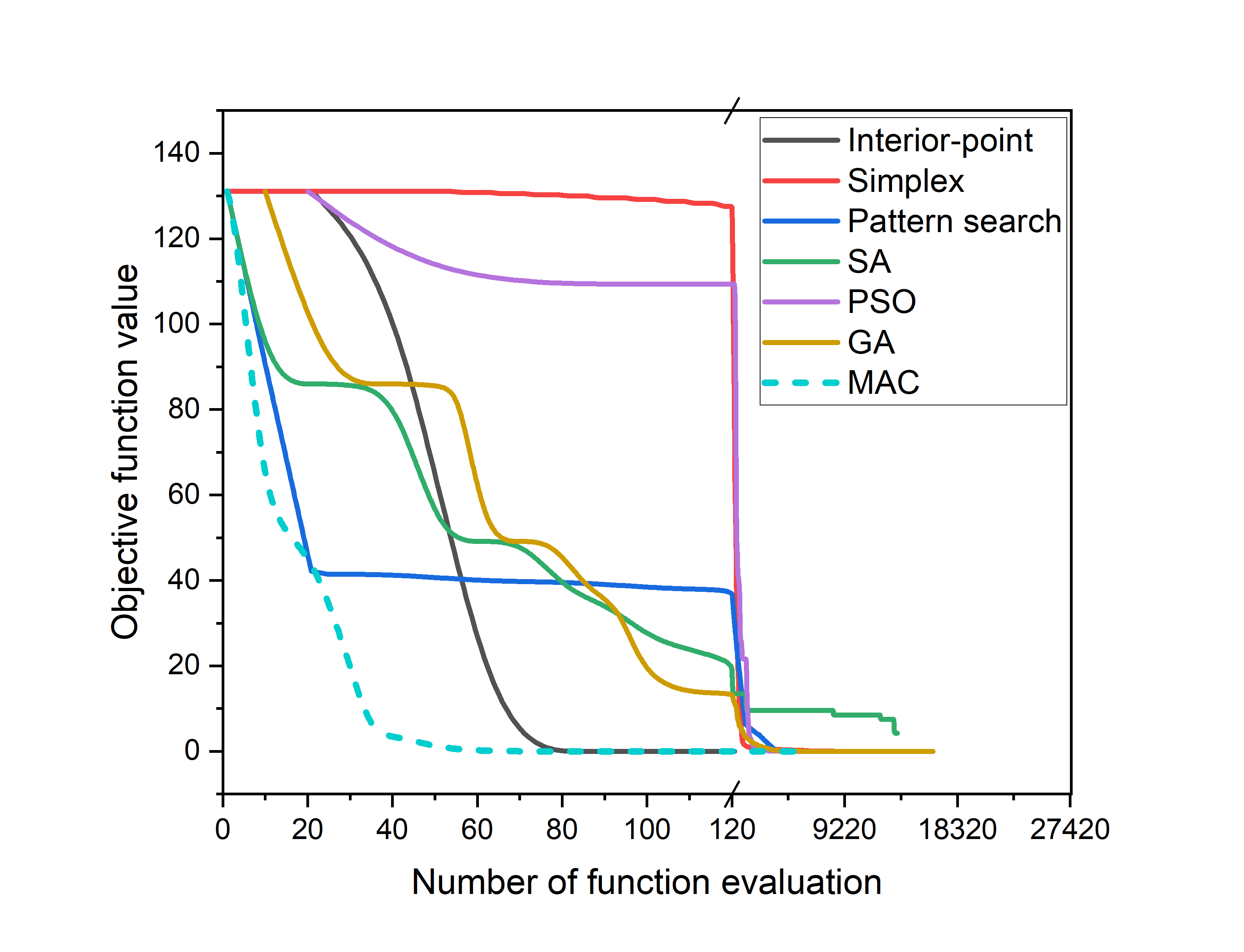}
    \caption*{Sphere 20D function }
    \label{}
    \end{minipage}
    \hfill
    \begin{minipage}{0.49\textwidth}
    \centering
    \includegraphics[width =1.25\linewidth]{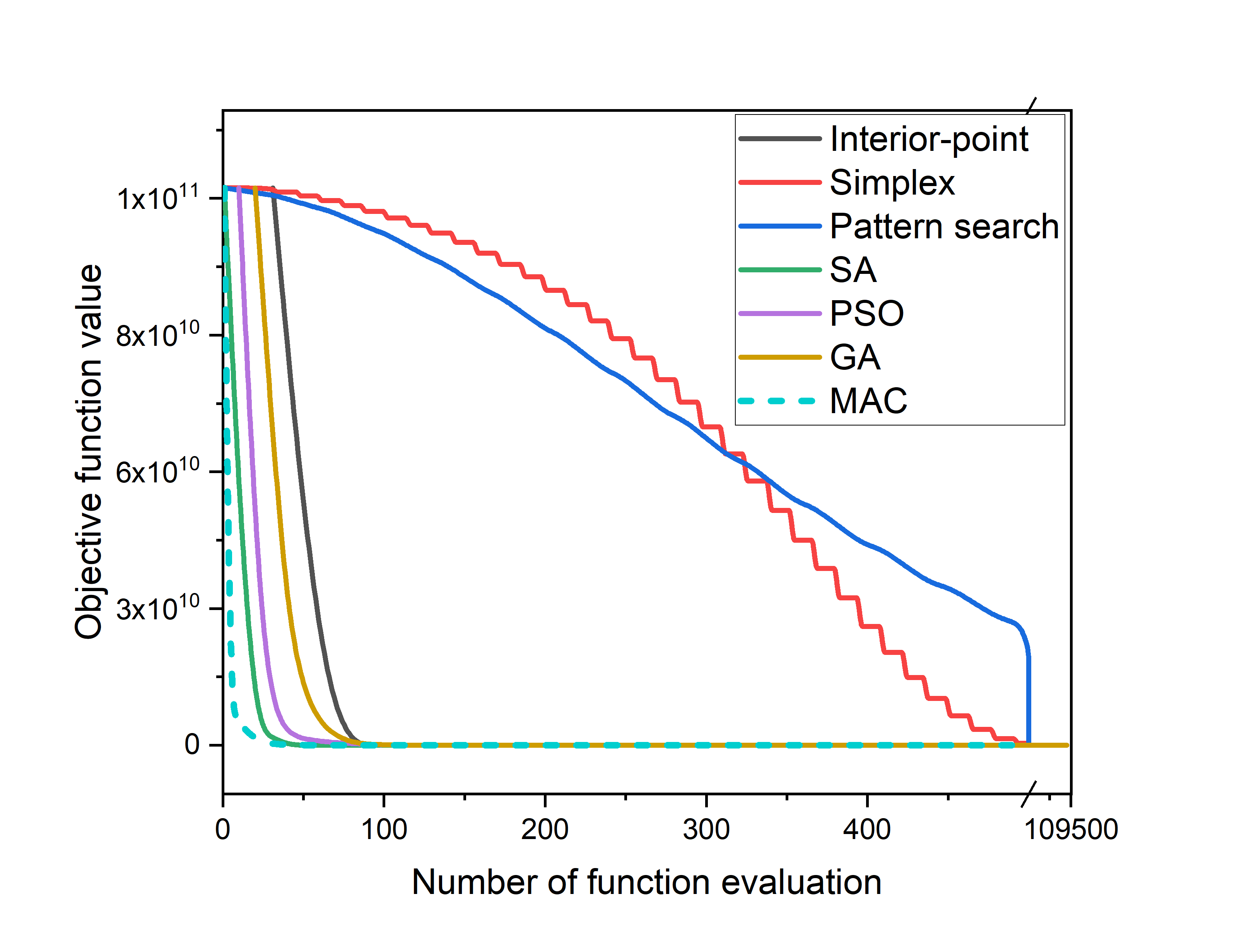}
    \caption*{Zakharov 30D function}
    \label{}
    \end{minipage}
    \caption{The objective function value corresponding to the number of function evaluation for different benchmark functions using MAC and other numerical optimization methods}
    \label{fig}
\end{figure}
\newpage

From the above tabular results, we can say that, the MAC method in general has good performance in most of the benchmark functions except in some of the most complex test functions. In some of the most complex test functions (f9, f11 and f16), the MAC method failed to find an optimum value of the objective function as all the other numerical methods were failed too. According to the summaries of the above tables (final objective function value), the MAC method outperformed almost all the other numerical optimization methods on the test functions: Rastrigin (f3), Zakharov (f8), Layeb02 (f10), Layeb10 (f13) and Layeb11 (f14). In addition to that, the MAC method outperformed the SA in all the test functions except in the Layeb01 (f9) test function. In terms of the number of function evaluations, in general, the MAC method required more function evaluation than all the local optimization methods and less function evaluation than the global optimization methods. The running time requirement in general does not have a significant difference among all the methods and all the test functions. However, in most of the test functions, the MAC method ended up with less running time (sec) than the other global optimization methods. 
\section{Summary and outlook}\label{sec:summary}
A ''good'' algorithm (or heuristics) must be very ''efficient'' in such a way as to be applicable to real-life problems, such as e.g. model fitting, maximum likelihood estimate, cost minimization, and performance maximization. 
What do we mean by ''efficient''? 
On the one hand, we have to take into consideration that $Q$ is probably expensive to evaluate, and we may or may not be able to efficiently and/or accurately compute its gradient or the Hessian matrix. 
So our task is rather to find a minimum point of $Q$ with as few evaluations as possible. 
On the other hand, we must recognize what we are looking for, we must have well-defined stopping criteria, convergence order, and a clear image of the code. 

New and new optimization methods are being published even nowadays. One reason for it is that numerical optimization methods are widely used for solving a series of optimization problems having arisen in science and technology. The development of the MAC stochastic optimization method presented in this paper was motivated by the task of the estimation of reaction rate parameters of large chemical kinetics models. Such models consist of large systems of ordinary or partial differential equations. The specialty of these optimization problems is that the prior domain of uncertainty of the fitted parameters is large and within this domain, the objective function contains a large number of local minima. Also, the evaluation of the objective function is very computationally expensive, since it includes the solution of many large systems of differential equations. Such optimization tasks arise in many fields including chemistry science and engineering, combustion science, and systems biology (e.g. development of metabolic models).
Our immediate plan is to test the MAC method on a real-life chemical kinetics problem related to combustion models.

\section{Supplementary material}

We are attaching as Supplementary material
\begin{enumerate}
\item The Matlab code of the MAC method
\item The benchmark functions. 
\end{enumerate}

\bibliography{_estimation}
\bibliographystyle{apalike}
\end{document}